\newtheorem{theorem}{Theorem}
\newtheorem{prop}[theorem]{Proposition}
\newtheorem{corollary}[theorem]{Corollary}
\newtheorem{lemma}[theorem]{Lemma}
\newcommand{\vs}{\mathsf{vs}}
\newcommand{\tw}{\mathsf{tw}}
\newcommand{\pw}{\mathsf{pw}}
\newcommand{\MG}{\mathsf{MG}}
\newcommand{\sgn}{\text{sgn}}
\newcommand{\poly}{\mathsf{poly}}
\newcommand{\separ}{\mathsf{S}}
\newcommand{\PG}{\mathsf{PG}}
\begin{document}

\title{Polynomial Threshold Functions of Bounded Tree-Width: Some Explainability and Complexity Aspects}


\author[1]{Karine Chubarian}
\author[1]{Johnny Joyce}
\author[1,2]{Gy\"orgy Tur\'an}

\affil[1]{University of Illinois at Chicago}
\affil[2]{HUN-REN-SZTE Research Group on AI, Szeged}
\affil[ ]{\emph{\texttt{\{kchuba2,jjoyce22,gyt\}@uic.edu}}}

%
%
\maketitle

\abstract{The tree-width of a multivariate polynomial is the tree-width of the hypergraph with hyperedges corresponding to its terms.
Multivariate polynomials of bounded tree-width have been studied by Makowsky and Meer as a new sparsity condition that allows for polynomial solvability of problems which are intractable in general. We consider a variation on this theme for Boolean variables.
A representation of a Boolean function as the sign of a polynomial is called a polynomial threshold representation.
We discuss Boolean functions representable as polynomial threshold functions of bounded tree-width and present two applications to Bayesian network classifiers, a probabilistic graphical model. Both applications are in Explainable Artificial Intelligence (XAI), the research area dealing with the black-box nature of many recent machine learning models.
We also give a separation result between the representational power of positive and general polynomial threshold functions.}

\vspace{1cm}

Dedicated to Janos Makowsky on the occasion of his 75th birthday

\section{Introduction}

The tree-width of a multivariate polynomial is the tree-width of the hypergraph with hyperedges corresponding to its terms.
\emph{Multivariate polynomials of bounded tree-width} have been studied by Makowsky and Meer~\cite{MakMeer,Mako8} as a new sparsity condition that allows for polynomial solvability of problems which are intractable in general.
The paper is closely related to Courcelle, Makowsky and Rotics~\cite{CoMaRo} and to Fischer, Makowsky and Ravve~\cite{Fisch33}. 
The topics covered are \emph{polynomials, graphs, widths, counting} and \emph{logic}. 
We discuss a variation on this theme, in the context of Boolean functions. Polynomials, graphs, widths, counting and (propositional) logic all appear, although the context is different. The problems  considered are from  Explainable Artificial Intelligence (XAI) and complexity theory.

Boolean functions $f : \{0, 1\}^n \to \{0, 1\}$
can be represented by multivariate polynomials exactly or approximately in a many different ways over $GF(2)$ or the reals.
In this paper we consider polynomials over the reals.
As $x^2 = x$ for 0 and 1, polynomials can be assumed to be multilinear. 
Let
\begin{equation} \label{eq:alap}
p(x_1, \ldots, x_n) = \sum_{I \in {\cal I}} \beta_I x_I
\end{equation}
be a multilinear polynomial, where ${\cal I}$ is a family of subsets of $[n]$, $\beta_I \in \mathbb{R}$ and $x_I = \prod_{i \in I} x_i$. 
The degree of the polynomial is the maximal number of variables in a term, and its size is the number of terms. One can consider the Boolean function
\[ \sgn(p(x_1, \ldots, x_n)) \, : \{0, 1\}^n \to \{0, 1\}, \]
where $\sgn$ is the sign function (1 for nonnegative values and 0 otherwise). 
For example,
\begin{equation} \label{eq:ex}
\sgn(x_1 + x_2 + x_3 + x_4 -  x_1 x_2 - x_3 x_4 - 2)
\end{equation}
represents the 4-variable Boolean function which has value 1 iff  $(x_1, x_2, x_3, x_4)$ contains at least three 1s, or both $(x_1, x_2)$ and $(x_3, x_4)$ contain exactly one 1.

Every Boolean function can be represented in this form, called a \emph{polynomial threshold function (PTF) representation} (see Sect.~\ref{sub:boo}).
The smallest degree and size of polynomials representing a Boolean function are important measures of its complexity. 
A Boolean function is a degree-$d$ polynomial threshold function if it can be written as the sign of a degree-$d$ polynomial. Linear threshold functions (LTF or perceptrons) played an important role since the beginning of machine learning.

We consider the \emph{minimal tree-width of polynomial sign-representations} as a measure of complexity of a Boolean functions. 
Consider a polynomial $p$ as in (\ref{eq:alap}).
  The \emph{term-hypergraph} $H_p$ of polynomial $p$ 
has vertex set $[n]$ and edge set ${\cal I}$. 
The tree-width of polynomial $p$ is the tree-width of its term-hypergraph $H_p$. A Boolean function is a tree-width-$k$ polynomial threshold function if it can be written as the sign of a tree-width-$k$ polynomial. Thus, for example, the Boolean function (\ref{eq:ex}) has tree-width 1.

\medskip

Polynomial threshold representations are useful in many different areas. One of these is a probabilistic graphical model, the \emph{Bayesian network classifier (BNC)}~\cite{Friedm97}, a generalization of the Naive Bayes classifier.
A Bayesian network classifier
can be represented by a polynomial threshold function, where the terms reflect the graphical structure of the network (see~Sect.~\ref{sec:bnc_and_ptf}).
Polynomials representing the Naive Bayes classifier are linear. \emph{Tree Augmented Bayesian network classifiers (TAN)} have additional edges forming a forest~\cite{Friedm97}~\footnote{\cite{Friedm97} considers trees only, but here one can use forests as well.}. They correspond to quadratic polynomial threshold functions (QTF), where the quadratic terms correspond to the edges of the forest. Bayesian network classifiers with a bounded tree-width network structure correspond to polynomials of bounded tree-width. Bounded tree-width Bayesian networks form a tractable subclass for several inference and learning problems~\cite{Dar34,KolFri}. In this paper we consider some aspects of Bayesian network classifiers related to their explainability.

\medskip

{\bf Explainability}
Recent machine learning models, in particular deep neural networks, are often black boxes in the sense that they do not provide an explanation for the output produced, and this hinders the development of trustworthy AI. Standard examples are that a rejected loan applicant needs to know the reason for the rejection, and a physician needs to know reasons for a suggested diagnosis.
The need for explanations has a long history in machine learning~\footnote{For example, the MYCIN expert system contained an ``Explanation System, which understands simple English questions and answers them in order to justify its decisions or instruct the user''~\cite{Short74}.}, but the recent developments
amplified the relevance of this issue and
brought about Explainable AI as a separate field of research~\cite{molnar2022}.

How to define what is an explanation is a difficult question, which has a long history in the philosophy of science. The type of explanation relevant in a particular machine learning application is context-dependent.
 A \emph{post-hoc} explanation provides an explainable model~\footnote{In the literature an explainable model, such as a decision tree, is usually referred to as an interpretable model. As interpretations are used in logic in a different sense, and also for simplifying terminology, we use explainability for this purpose as well.}
corresponding (exactly or approximately) to a black-box model. Post-hoc explanations can be \textit{global} or \textit{local}, depending on whether the whole model is explained or only its behavior on a specific input.
Even if there is an agreed upon notion of explanation, it is usually not clear how to \emph{evaluate} the explanations provided, one reason being the lack of \emph{ground truth} (e.g., what is the reason for classifying an image as a cat?).

A neural network is usually considered non-explainable and a decision tree is usually considered explainable (but see, for example, \cite{Lipton49} for a discussion of this distinction). Probabilistic graphical models are somewhere in between, having both explainable and non-explainable features. A Bayesian network, on one hand, gives explicit graphical information about conditional independencies. On the other hand, probabilistic inference is hard, and usually not explainable for the user. This holds even for the Naive Bayes classifier. Therefore, explaining Bayesian networks has been studied for a long time~\cite{LacaveD02}.

\medskip

{\bf Explaining Bayesian Network Classifiers by Approximate OBDD} A post-hoc approach to global explanations for Naive Bayes classifiers, proposed by~\cite{ChanD03}, is to compile them into \emph{Ordered Binary Decision Diagrams (OBDD)}~\cite{Wegener00x}. This is an instance of
 \emph{knowledge compilation}, transforming a knowledge representation into another which is more suitable for a given application~\cite{DarMar6}.
Explainability aspects of knowledge compilation are considered 
in~\cite{Colnet98}.

Besides being explainable in the local sense similarly to decision trees, OBDDs can also be considered as global explanations, as they provide a useful data structure for Boolean functions. There are efficient algorithms for deciding their properties and performing operations on them that are relevant for propositional reasoning about a model~\cite{Madre9,Wegener00x,Darwi68}.
Thus, if a system has a component which computes a Boolean function, then having an OBDD representing this function can be useful for reasoning about how this component works as part of the system. In this sense, OBDD is a ``reasonable'' representation.

It was shown in~\cite{HosakaTKY97} that
linear threshold functions require exponential size
OBDDs in the worst case, so such compilation algorithms have to be exponential.
Therefore, for efficient compilation approximations need to be considered. As learned models are supposed to be approximations, this requirement seems natural.

In \cite{Chubarian57} we gave an approximation scheme compiling Bayesian network classifiers of bounded tree-width into OBDD. Bounded tree-width is well-known to be a useful parameterization for efficient computation. Here we consider its relevance for finding explanations efficiently, which is a different aspect.
An outline of the algorithm is given in Sect.~\ref{sec:obd}. From the point of view of knowledge representation in AI, 
the result is an example of \emph{approximate knowledge compilation}. Negative results for approximate knowledge compilation are given in~\cite{Colnet34}.

The algorithm is based on first translating the classifier into a PTF of bounded tree-width, and then translating this PTF
into an approximate OBDD. The second step generalizes the approximate deterministic counting algorithm of~\cite{GopalanKMSVV} (an FPTAS, fully polynomial-time approximation scheme) for the knapsack solution counting problem, corresponding to the LTF case. Error is measured with respect to the input distribution generated by the classifier, and not the uniform distribution. 

Deterministic approximation algorithm schemes for counting the number of satisfying truth assignments for a class of Boolean functions work as follows. Given an $\epsilon > 0$ and a function $f$ from the class, an estimate is computed for the probability $P(f(x) = 1)$, where $x$ is uniformly distributed over $\{0, 1\}^n$. \textit{Multiplicative} (resp., \textit{additive}) approximation algorithms compute an estimate within a multiplicative error $1 \pm \epsilon$ (resp., an additive error $\pm \epsilon$). If the satisfiability problem for the class (i.e., deciding whether there is an $x$ such that $f(x) = 1$) is NP-complete then, assuming $P \neq NP$, polynomial time multiplicative approximation algorithms cannot exist, as those could be used to solve the satisfiability problem in polynomial time. Hence in such cases additive approximation has to be considered.
In contrast to the linear case, for degree at least 2 the PTF satisfiability problem is NP-complete, so only additive approximation algorithms can be hoped for (see~\cite{Serv33}, and~\cite{De33} for the QTF case). Tree-width provides a different parameterization.

\medskip

{\bf Explainability Evaluation for Generalized Additive Models with Interactions} In Sect.~\ref{sec:gam} we discuss another application of polynomial threshold functions to explainability.
A \emph{generalized additive model (GAM)} is a generalization of logistic regression. It can be extended further to 
include interaction terms between the variables. InterpretML~\cite{CarIML} contains an implementation of the $GA^2M$
algorithm of~\cite{Lou13}, which is applied, for example, in the medical domain~\cite{Caruana154}.
Its accuracy is competitive with more complex models, and it is also explainable in the practical, informal sense that the interaction terms are often meaningful for medical experts. 
 
We discuss an experiment aimed at a \emph{qualitative evaluation} of the explainability of the $GA^2M$ algorithm. Bayesian network classifiers provide a class of instances with ground truth.
The applicability of BNC in this context is due to the connection between BNC and polynomial threshold functions mentioned above.
This connection implies that a BNC can be viewed as a generalized additive model with interactions, with the structure of the network corresponding to interaction terms. The ground truth is structured in the sense that terms not included in the basic representation also have a probabilistic meaning in the network. The basic setup of $GA^2M$ is to handle pairwise interactions, therefore so far we have only considered TAN in our experiments.

We present experimental results for a small synthetic example, using InterpretML~\cite{CarIML}. 
BNC is a generative model, so
one can train a $GA^2M$ on random examples generated from the input distribution of the BNC. The ground truth given by the polynomial threshold function
representing the classifier can be compared with the polynomial produced by the $GA^2M$ in different ways, perhaps the simplest being comparing the terms in the ground truth and learned polynomials. Besides achieving almost optimal accuracy on the example, the learned terms show a good correspondence to the ground truth.

\medskip

{\bf Positive Polynomial Threshold Functions} A main research direction of computational complexity theory is to compare the computational power of various models of computation. 

A Boolean circuit computes a Boolean function using $\wedge, \vee$ and $\neg$ gates. The circuit is \emph{monotone} if it contains only $\wedge, \vee$ gates and no negation. 
Such circuits can compute all monotone Boolean functions and only those. Comparing the computational power of monotone and general circuits and other computational models has been studied in complexity theory for a long time~\cite{Pratt75}. The
problem is also referred to as ``monotone versus positive'' (\cite{Ajtai55}, where it is shown that Lyndon's theorem fails for finite models by a circuit lower bound). Superpolynomial separations between monotone and general circuits were given in~\cite{Razb11,Razb22}, strengthened to exponential separation in~\cite{Tardos88}.
A survey of results on monotone circuit complexity is given in~\cite{Jukna147}.

As far as we know the related problems for PTF have not been considered before.
A PTF is \emph{positive} if it is of the form $p(x_1, \ldots, x_n) \ge t$, where $p$ is a polynomial with nonnegative coefficients and no constant term, and
$t$ is nonnegative. Positive PTF represent monotone functions and only those, so one can ask about comparing the representational power of the positive and general versions for monotone Boolean functions.
In Sect.~\ref{sec:mon} we give an example of a QTF of linear size (and tree-width 1) for which every positive QTF has size $\Omega(n^2)$ (and thus tree-width $\Omega(n)$).
There are several open problems in this direction and we mention some of those as well.

\section{Preliminaries} \label{sec:preli}

In this section we give some background on Boolean functions, the notions of width considered and Bayesian network classifiers.

\subsection{Boolean functions} \label{sub:boo}

A Boolean function is of the form $f : \{0, 1\}^n \to \{0, 1\}$. 
For $a = (a_1, \ldots, a_n) \in \{0, 1\}^n$ consider the polynomial
$p_a(x) = \prod_{i=1}^n I_{a_i}(x_i)$, where 
for $\ell \in \{0, 1\}$
\begin{equation}
I_\ell(z) = \begin{cases} z & \textrm{if} \,\, \ell = 1 \\ 1 - z & \textrm{if} \,\, \ell = 0 \end{cases}    
\end{equation}
over the reals.
Then $p_a(a) = 1$ and $p_a(b) = 0$ for every $b \in \{0, 1\}^n$ different from $a$. Thus $p_f(x) = \sum_{a \, : \, f(a) = 1} p_a(x)$ is is an exact polynomial representation of $f$, which is in fact unique among multilinear polynomials.

The polynomial $p_f(x) - 1/2$ is a polynomial sign-representation of $f$. Thus every Boolean function has a sign-representation of degree at most $n$ and size at most $2^n$. Random Boolean functions require linear degree and exponential size~\cite{Odonn8}.

Truth values can also be represented by $\{\pm 1\}$ instead of $\{0, 1\}$.
In this case $x^2 = 1$ and so polynomials can be assumed to be multilinear as well~\footnote{It is also of interest to consider other truth values, e.g. $\{1, 2\}$~(see \cite{Hansen7,Hajnal73}), and then degrees can matter.}.
The linear mapping $x_i' = 1 - 2 x_i$ for $i = 1, \ldots, n$ provides a transformation between the two representations. 
The unique $\{\pm 1\}$ multilinear representation of a Boolean
function is its Fourier representation~\cite{ODo}.
Comparing both exact and polynomial threshold representations, degrees are preserved, but the number of terms changes. The parity function $x_1 \oplus \ldots \oplus x_n$ (where $\oplus$ is addition in $GF(2)$) requires exponentially many terms over $\{0, 1\}$ even for a polynomial threshold representation~\cite{Krause98}, but it is simply the product over $\{\pm 1\}$.
Being a tree-width-$k$ polynomial threshold function is the same property over $\{0, 1\}$ and $\{\pm 1\}$. The transformations between the two representations do not increase the tree-width of the term-hypergraph, as every new edge is a subset of an already existing edge.

A Boolean function $f \, : \, \{0, 1\}^n \to \{0, 1\}$
is monotone if $x \le y$ implies $f(x) \le f(y)$, where $x = (x_1, \ldots, x_n) \le y = (y_1, \ldots, y_n)$ iff $x_i \le y_i$ for every $i$. 
A PTF $p(x) \ge t$ is \emph{positive} if every coefficient of $p$ is nonnegative, $p$ has no constant term, and $t$ is also nonnegative.
A Boolean function is monotone iff it has a positive PTF representation. 
Every monotone function can be written as a monotone DNF~\cite{Jukna147}; replacing conjunctions with products, disjunctions with sums and using threshold $1/2$ gives a positive PTF. The other direction follows directly from the definitions.

\subsection{Widths}
\label{sec:prel}

For an undirected graph $G = (V, E)$, a \emph{tree-decomposition} of $G$ is given by a tree $T$ and bags $B_t \subseteq V$ for every vertex $t$ of $T$, such that for every $(u, v)\in E$ there exists $t$ such that $u, v\in B_t$ and for every $v \in V$ the vertices $t$ such that $v\in B_t$ form a subtree of $T$. The \emph{width} of a tree decomposition is $\max_{t\in V(T)}|B_t|-1$ and the \emph{tree-width}  $\tw(G)$ of $G$  is the minimal width of tree-decompositions of $G$.  The \emph{path-width}  $\pw(G)$ of $G$ is defined by trees restricted to paths. There are many other notions of width, including those for directed graphs and hypergraphs~\cite{Hlin8}. 

The \emph{moral graph} of a directed acyclic graph (DAG) $G$ is the undirected graph $\MG(G)$ obtained from $G$ by adding undirected edges between co-parents and disregarding the direction of the original directed edges.
The tree-width of $G$ is $\tw(G)=\tw(\MG(G))$ and its path-width is $\pw(G)=\pw(\MG(G))$. This definition is motivated by its use in probabilistic inference in Bayesian networks~\cite{Dar34,KolFri}.

The \emph{primal graph} $\PG(H)$ of a hypergraph $H = (V, E)$ with $E \subseteq P(V)$ is the undirected graph obtained from $H$ by replacing every hyperedge by a clique. The tree-width of $H$ is $\tw(H)=\tw(\PG(H))$ and its path-width is $\pw(H)=\pw(\PG(H))$. 

The \emph{term-hypergraph} $H_p$ of the PTF $p$
has vertex set $[n]$ and edge set ${\cal I}$. The tree-width of a PTF is
the tree-width of its term-hypergraph, and similarly for path-width.

\subsection{Bayesian Network Classifiers} \label{sub:baye}

We use $X_i, x_i$, resp., $a_i$, to denote binary random variables, Boolean variables, resp., Boolean constants. The restriction of a vector $x = (x_1, \ldots, x_n)$ to a subset $I$ of its coordinates is denoted by $x^I$.

A \emph{Bayesian network classifier (BNC)} $N$ is a DAG $G_N$ over binary \emph{input variables} $X_1, \ldots, X_n$ and a binary \emph{classifier variable} $C$, with local conditional probabilities given for each vertex. These specify the conditional probability distribution of the variable corresponding to that vertex, depending on the values of its parents in the DAG. 
It is assumed that $(C, X_i)$ is an edge for every $i = 1, \ldots, n$. The set of additional edges is denoted by $E_N$.
In a \emph{Naive Bayes Classifier} $E_N = \emptyset$.
In a \emph{Tree Augmented Naive Bayes Classifier (TAN)} $E_N$ is an  in-forest, i.e., every node has at most one parent. Fig.~\ref{fig:networkandvalues}
shows a TAN with the specifications of the DAG and the conditional probabilities.

The BNC generates a probability distribution over the random variables 
$C$ and $X_1, \ldots, X_n$ as follows. First a value is generated for $C$ according to its distribution (with no parents in this case), and then values are generated for $X_1, \ldots, X_n$ proceeding down the DAG, always using the conditional probability distribution corresponding to the values generated for the parents. For the TAN of Fig.~\ref{fig:networkandvalues} the random values are generated moving down the tree.

Let $\Pi_i = \{j \, : \, X_j \,\, \textrm{is a parent of} \,\, X_i\}$ be the set of parents of $X_i$ other than $C$.
The \emph{family} of $i$ is $\{i\} \cup \Pi_i$. Let $d_N = 1 + \max_i \left|\Pi_i\right|$ be the maximal size of families in $G_N$, referred to as the \emph{degree} of the Bayesian network. The tree-width of $N$ is $\tw(N)=\tw(\MG(G_N\setminus\{C\}))$, its path-width is  $\pw(N)=\pw(\MG(G_N\setminus\{C\}))$.

 The local conditional probabilities are 
\[
p^0_c = P_N(C = c), \hspace{1cm}
 p^i_{\left(a_i, a_{\Pi_i}, c\right)} = P_N(X_i = a_i \, | \, X_{\Pi_{i}} = a_{\Pi_{i}}, C = c), \]
for $i = 1, \ldots, n$.

The joint distribution of the variables is
\begin{equation} \label{eq:prod3}
P_N(X_1 = a_1, \ldots, X_n = a_n, C = c) = p^0_c \, \prod_{i=1}^n p^i_{\left(a_i, a_{\Pi_i}, c\right)}.
\end{equation}

The marginal distribution over the input variables, also referred to as the \emph{input distribution}, is
\begin{eqnarray*}
P_{N,X}(X_1 = a_1, \ldots, X_n = a_n)
= \sum_{c=0}^1 P_N(X_1 = a_1, \ldots, X_n = a_n, C = c). \end{eqnarray*}

The Bayesian network classifier corresponding to $N$ is a Boolean function $f_N(x_1, \ldots, x_n)$ where
$f_N(a_1, \ldots, a_n) = 1$ iff
\begin{equation}
P_N(a_1, \ldots, a_n, 1) \geq P_N(a_1, \ldots, a_n, 0). \nonumber
\end{equation}

Bayesian network inference and learning problems have been studied in great detail (see \cite{Dar34,KolFri}). In this paper we consider some explainability aspects.

\section{Bayesian Network Classifiers and Polynomial Threshold Functions}\label{sec:bnc_and_ptf}

In this section we formulate the representation of BNC as PTF, which is used in the next two sections.
A unified presentation of work on this connection, going back to~\cite{Min61}, is given in~\cite{Varando15}.

\begin{prop} \label{prop:netp}
Let $N$ be a Bayesian network classifier with non-zero conditional probabilities.
Then there is a polynomial $p$ such that
\[ \log \frac{P_N(C = 1 \, | \, X = x)}{P_N(C = 0 \, | \, X = x)} = p(x), \]
where $p$ is of degree at most $d_N$ and every term is a subset of a family.
Thus $f_N$ is a PTF of degree at most $d_N$.
\end{prop}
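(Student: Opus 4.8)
The plan is to reduce the conditional log-odds to a joint log-odds, factor it using the network's product formula~(\ref{eq:prod3}), and then observe that each resulting logarithmic factor is a real-valued function of only the variables in one family, so that it has an exact multilinear polynomial representation supported on subsets of that family.

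First I would apply Bayes' rule. Writing $P_N(C = c \mid X = x) = P_N(X = x, C = c)/P_N(X = x)$, the common denominator $P_N(X = x)$ cancels in the ratio, so
\[ \log \frac{P_N(C = 1 \mid X = x)}{P_N(C = 0 \mid X = x)} = \log \frac{P_N(X = x, C = 1)}{P_N(X = x, C = 0)}. \]
Substituting the factorization~(\ref{eq:prod3}) into the right-hand side and using that the logarithm of a product is the sum of logarithms yields
\[ \log \frac{p^0_1}{p^0_0} + \sum_{i=1}^n \log \frac{p^i_{(x_i, x_{\Pi_i}, 1)}}{p^i_{(x_i, x_{\Pi_i}, 0)}}. \]
The assumption that all conditional probabilities are non-zero is exactly what makes every ratio and logarithm here well-defined.

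The key step is to identify each summand as a polynomial supported on a family. The $i$-th summand depends only on the Boolean values $x_j$ for $j$ in the family $\{i\} \cup \Pi_i$, so it is a real-valued function $g_i : \{0,1\}^{\{i\}\cup\Pi_i} \to \mathbb{R}$. By the uniqueness of the multilinear representation noted in Sect.~\ref{sub:boo} (the functions $p_a$ span all real-valued functions on the cube, so $g_i(x) = \sum_{a} g_i(a)\, p_a(x^{\{i\}\cup\Pi_i})$ expands into a multilinear polynomial), $g_i$ equals a multilinear polynomial whose every monomial $x_I$ satisfies $I \subseteq \{i\}\cup\Pi_i$. Since $|\{i\}\cup\Pi_i| = 1 + |\Pi_i| \le d_N$, each such polynomial has degree at most $d_N$ and all its terms are subsets of the family of $i$. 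The leading constant $\log(p^0_1/p^0_0)$ is the degree-zero term corresponding to $I = \emptyset$, which is a subset of every (nonempty) family.

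Summing the constant and the $n$ polynomials $g_i$ gives the desired polynomial $p$; collecting monomials only combines coefficients of identical terms, so both the degree bound and the property that every term is a subset of a family are preserved, establishing the first claim. For the PTF conclusion, note that since the probabilities are positive, $f_N(a) = 1$ iff $P_N(a,1) \ge P_N(a,0)$ iff the log-odds $p(a)$ is nonnegative; hence $f_N = \sgn(p)$ is a PTF of degree at most $d_N$. I do not expect a genuine obstacle here: the only point requiring care is the bookkeeping that each logarithmic factor, being a function of at most $d_N$ Boolean variables drawn from a single family, contributes only monomials that are subsets of that family — which is immediate from the existence and uniqueness of multilinear interpolation.
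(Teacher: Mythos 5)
Your proof is correct and follows essentially the same route as the paper: reduce the conditional log-odds to the joint ratio, factor via~(\ref{eq:prod3}), take logarithms, and observe that each summand is a multilinear polynomial supported on a single family. The only cosmetic difference is that the paper writes this interpolation explicitly, expressing each conditional probability as $\prod_{(a_i,a_{\Pi_i})} \bigl(p^i_{(a_i,a_{\Pi_i},c)}\bigr)^{I_{a_i}(x_i)\prod_{j\in\Pi_i}I_{a_j}(x_j)}$ so the monomials appear directly after taking logs, whereas you invoke the existence and uniqueness of the multilinear representation abstractly.
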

\begin{proof} 
It holds that
\begin{eqnarray*}
P_N(X_i = x_i \, | \, X_{\Pi_i} = x_{\Pi_i},  C = c)
=\prod_{\left(a_i, a_{\Pi_i}\right)} {p^i_{\left(a_i, a_{\Pi_i}, c\right)}}^{I_{a_i}(x_i) \prod_{j \in \Pi_i} I_{a_j} (x_j)}.
\end{eqnarray*}
From (\ref{eq:prod3}) we get
\begin{eqnarray*}
P_N(X_1 = x_1, \ldots, X_n = x_n, C = c)
=p^0_c \; \prod_{i= 1}^n \prod_{\left(a_i, a_{\Pi_i}\right)} \left(p^i_{\left(a_i, a_{\Pi_i}, c\right)}\right)^ {I_{a_i}(x_i) \prod_{j \in \Pi_i} I_{a_j} (x_j)}.
\end{eqnarray*}
Then for $x = (x_1, \ldots, x_n)$ it holds that
\[ \frac{P_N(C = 1 \, | \, X = x)}{P_N(C = 0 \, | \, X = x)} = \frac{p^0_1}{p^0_0} \;
\prod_{i= 1}^n \prod_{\left(a_i, a_{\Pi_i}\right)} \left(\frac{p^i_{\left(a_i, a_{\Pi_i}, 1\right)}}
{p^i_{\left(a_i, a_{\Pi_i}, 0\right)}}\right)^ {I_{a_i}(x_i) \prod_{j \in \Pi_i} I_{a_j} (x_j)}.\]
Taking logarithms
\[ \log \frac{P_N(C = 1 \, | \, X = x)}{P_N(C = 0 \, | \, X = x)} = \log \frac{p^0_1}{p^0_0} +
\sum_{i= 1}^n \sum_{(a_i, a_{\Pi_i})} \log \left(\frac{p^i_{\left(a_i, a_{\Pi_i}, 1\right)}}
{p^i_{\left(a_i, a_{\Pi_i}, 0\right)}}\right) \; {I_{a_i}(x_i) \prod_{j \in \Pi_i} I_{a_j} (x_j)}.\]
\end{proof}

\begin{corollary} \label{cor:tanw}
For a TAN with non-zero conditional probabilities $f_N$ is a QTF of tree-width 1, with quadratic terms corresponding to $E_N$.
\end{corollary}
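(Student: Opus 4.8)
The plan is to specialize Proposition~\ref{prop:netp} to the TAN case and then read off the term structure of the resulting polynomial. First I would use the defining property of a TAN: the extra edges $E_N$ form an in-forest, so every input variable has at most one input parent, i.e. $|\Pi_i| \le 1$ for all $i$. Hence $d_N = 1 + \max_i |\Pi_i| \le 2$, and Proposition~\ref{prop:netp} immediately yields that $f_N$ is a PTF of degree at most $2$, that is, a QTF.

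Next I would identify exactly which monomials can occur in the polynomial $p$ produced by the proposition. Each summand there is a log-ratio coefficient times a product $I_{a_i}(x_i)\prod_{j\in\Pi_i}I_{a_j}(x_j)$. When $\Pi_i=\emptyset$ (a root of the forest) this product is the affine factor $I_{a_i}(x_i)$, contributing only a constant and a term linear in $x_i$. When $\Pi_i=\{j\}$ it is a product of two affine factors $I_{a_i}(x_i)\,I_{a_j}(x_j)$, each equal to $x$ or $1-x$; expanding gives a constant, terms linear in $x_i$ and in $x_j$, and a single quadratic monomial $x_i x_j$. Summing over the four values of $(a_i,a_j)$ and over all $i$ leaves the degree-$2$ part supported exactly on the pairs $\{i,j\}$ with $(j,i)\in E_N$. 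Thus the only quadratic terms correspond to edges of $E_N$, as claimed.

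Finally I would bound the tree-width through the term-hypergraph $H_p$. By the previous step its hyperedges are the singletons coming from the linear and constant terms together with the pairs $\{i,j\}$ with $(j,i)\in E_N$ coming from the quadratic terms. The primal graph $\PG(H_p)$ therefore has edge set equal to the undirected version of $E_N$, which is a forest since $E_N$ is an in-forest. A forest has tree-width at most $1$ (exactly $1$ when $E_N\neq\emptyset$), so $\tw(f_N)=\tw(H_p)=1$.

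The only point requiring any care is the claim that the quadratic terms correspond to $E_N$ \emph{exactly}: the coefficient of $x_i x_j$ is an alternating sum of four log-ratios and could in principle vanish for degenerate conditional probabilities, deleting the monomial for a particular edge. This does not weaken the tree-width conclusion, since removing hyperedges can only decrease tree-width; and for non-degenerate probabilities no such cancellation occurs, making the correspondence exact. I would flag this rather than verify non-cancellation in detail.
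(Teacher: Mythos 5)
Your proposal is correct and follows essentially the same route the paper intends: the corollary is stated as an immediate specialization of Proposition~\ref{prop:netp} (with $|\Pi_i|\le 1$ giving $d_N\le 2$ and terms supported on families, i.e.\ singletons and edges of $E_N$, whose primal graph is a forest of tree-width at most~1). The cancellation caveat you flag is also acknowledged in the paper (``the primal graph may be a proper subset of the moral graph due to cancellations''), and as you note it can only decrease the tree-width, so nothing is lost.
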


\begin{corollary} \label{cor:width_of_bn_to_width_pft}
For BNC of tree-width $k$ with non-zero conditional probabilities $f_N$
is a PTF of path-width $O(k \log n)$.
\end{corollary}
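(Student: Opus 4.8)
\noindent The plan is to combine the structural description of $p$ furnished by Proposition~\ref{prop:netp} with the classical fact that path-width exceeds tree-width only by a logarithmic factor. By Proposition~\ref{prop:netp}, $f_N$ is the sign of a polynomial $p$ in which every term is a subset of a family $\{i\}\cup\Pi_i$. The first step is to identify the term-hypergraph's structure with that of the network: I would show that the primal graph $\PG(H_p)$ is a subgraph of the moral graph $\MG(G_N\setminus\{C\})$, both taken on vertex set $[n]$.

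The key observation is that each family $\{i\}\cup\Pi_i$ induces a clique in $\MG(G_N\setminus\{C\})$: the vertex $i$ is joined to each $j\in\Pi_i$ by a (now undirected) edge of $G_N$, while any two elements of $\Pi_i$ are co-parents of $i$ and hence joined by a moralization edge. Since each term of $p$ lies inside some family, the clique it contributes to $\PG(H_p)$ is contained in the clique of that family in the moral graph; thus every edge of $\PG(H_p)$ is an edge of $\MG(G_N\setminus\{C\})$. As tree-width is monotone under passing to subgraphs, this gives $\tw(H_p)=\tw(\PG(H_p))\le \tw(\MG(G_N\setminus\{C\}))=\tw(N)=k$. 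So the PTF already has tree-width at most $k$; the content of the corollary is the upgrade to a \emph{path}-width bound.

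The final step is to invoke the bound $\pw(G)=O(\tw(G)\log m)$ valid for any $m$-vertex graph $G$. Applied to $\PG(H_p)$, which has at most $n$ vertices and tree-width at most $k$, this yields $\pw(H_p)=O(k\log n)$, as claimed.

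The structural step is routine; the real work is the tree-width-to-path-width conversion, which I expect to be the main obstacle. I would establish it (or cite it as folklore) via a centroid decomposition of the tree $T$ underlying a width-$k$ tree decomposition: repeatedly choose a node whose removal leaves components each containing at most half of the remaining nodes, lay out the components in sequence while carrying the separating bag forward into all later bags, and recurse. The recursion has depth $O(\log n)$ and contributes at most $k+1$ vertices per level to the carried set, producing a path decomposition of width $O(k\log n)$. The delicate point is bookkeeping the carried-over bags so that the connectivity and edge-coverage axioms of a path decomposition are verified; this is where the $\log n$ factor is genuinely incurred.
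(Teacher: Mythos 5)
Your proposal is correct and follows essentially the same route as the paper: every term of $p$ lies inside a family, each family is a clique in $\MG(G_N\setminus\{C\})$, so $\PG(H_p)$ is a subgraph of the moral graph and has tree-width at most $k$, and the path-width bound follows from $\pw(G)=O(\tw(G)\log n)$. The only difference is that the paper simply cites this last conversion (to~\cite{Korach98}) rather than reproving it via a centroid decomposition as you sketch.
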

\begin{proof}
For every edge of the primal graph of the term-hypergraph $H_p$ for the PTF  constructed above there is a family of $N$ containing that edge, which then belongs to the moral graph of the classifier.
The statement then follows from the fact that $\pw(G) = O(\tw(G) \log n)$ for undirected graphs~\cite{Korach98}.
\end{proof}

The primal graph may be a proper subset of the moral graph due to cancellations.

\section{Approximating Bayesian Network Classifiers with OBDD} \label{sec:obd}

In this section we show that Bayesian network classifiers of bounded tree-width can be approximated by polynomial size OBDD. We first introduce OBDD, then formulate the result and outline its proof.

\subsection{Ordered Binary Decision Diagrams}

\begin{figure}[htbp]
    \begin{subfigure}[b]{0.4\textwidth}
        \centering
        \includegraphics[width=\textwidth]{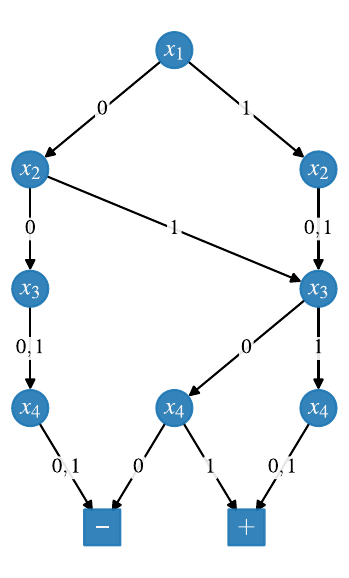}
        \caption{Minimal layered OBDD.}
    \end{subfigure}
    \hfill
    \begin{subfigure}[b]{0.4\textwidth}
        \centering
        \includegraphics[width=\textwidth]{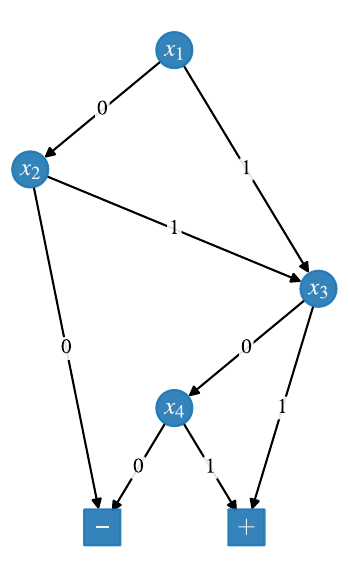}
        \caption{Reduced OBDD.}
    \end{subfigure}
    \caption{OBDDs for function (\ref{eq:ex}). }
    \label{fig:obdd}
\end{figure}

An \emph{ordered binary decision diagram (OBDD)} over Boolean variables $x_1, \ldots, x_n$ computes a Boolean function $f$.
An OBDD
is a DAG with two sinks labeled 0 and 1, and the other nodes labeled with variables. The DAG is assumed here to be layered, with directed edges going from a layer to the next layer, and sinks on the last layer. There are $n + 1$ layers and a permutation $\pi(i)$ of $[n]$ such that nodes on the $i$'th layer are labeled with variable $x_{\pi(i)}$. On the first layer there is a single start node labeled $x_{\pi(1)}$. Every non-sink node has two outgoing edges, labeled with 0, resp., 1. For every truth assignment $a = (a_1, \ldots, a_n)$, $f(a)$ is the label of the sink reached by following edge labels corresponding to the bits in $a$, evaluated in the order given by the labels of the layers. The \emph{width} of an OBDD is the maximal number of nodes in a layer~\footnote{Width is a relevant parameter for the computational power of OBDD: there is a jump at width 5~\cite{Barri33}.}. 

Given an ordering $\pi$ of the variables, every Boolean function has a unique
minimal layered OBDD, where every node of the $i$'th layer correspond to a subfunction obtained by fixing the variables above the layer.
This can further be simplified to the \textit{reduced} OBDD, where edges can ``jump'' layers, by deleting nodes with both outgoing edges leading to the same successor node, and redirecting their incoming edges to the successor node.
The left side of Fig.~\ref{fig:obdd} shows the minimal layered OBDD of the function (\ref{eq:ex}) and the right side shows its reduced OBDD, with respect to the identity permutation.

A \emph{generator OBDD (GOBDD)} $D$ generates a probability distribution over $\{0, 1\}^n$.
It is similar to a layered OBDD, except edges are also labeled with probabilities, and there is a single sink. A probability $p_u$ is associated with every non-sink vertex $u$, and the 0-edge (resp. 1-edge) leaving $u$ is labeled $p_u^0 = p_u$ (resp., $p_u^1 = 1 - p_u$). For every truth assignment $a = (a_1, \ldots, a_n)$, the GOBDD determines a path from the source to the sink, and $P_D(a)$ is the product of edge probabilities along the path. The \emph{width} of a GOBDD is the width of the underlying OBDD. The \emph{width} of a distribution is the minimal width of GOBDDs generating it. Product distributions have width one.

\subsection{Result and Proof Outline}

We now state the result on approximating Bayesian network classifiers of bounded tree-width with OBDD over the input distribution of the classifier. As discussed in the introduction, the goal is to provide a ``small'' approximate representation of bounded tree-width Bayesian network classifiers which allows for efficient reasoning. Besides the parameters discussed earlier, the bounds also depend on the bit-precision $q$ of the conditional probabilities.

\begin{theorem} \cite{Chubarian57} \label{thm:main_thm}
For every Bayesian network classifier of tree-width $k$ having $n$ Boolean variables and $q$-bit non-zero conditional probabilities,  and every $\varepsilon > 0$ there is an OBDD of size $\poly\left(n^{k}, q, 1/\varepsilon\right)$ approximating the classifier with multiplicative error at most $\varepsilon$ with respect to the input distribution of the classifier.
The OBDD can be constructed in time $\poly\left(n^{k}, q, 1/\varepsilon\right)$.
\end{theorem}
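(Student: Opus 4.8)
The plan is to follow the two-stage strategy described in the introduction: first replace the classifier by a polynomial threshold function of small path-width, and then build a layered OBDD by a bucketing dynamic program that generalizes the knapsack counting FPTAS of Gopalan et al.~\cite{GopalanKMSVV}. Applying Proposition~\ref{prop:netp}, I would write $f_N = \sgn(p(x))$, where $p(x) = \sum_{I \in \mathcal{I}} \beta_I x_I$ has every term contained in a family of $N$ and where the implicit threshold is $0$ (since $f_N(x)=1$ exactly when the log-ratio $p(x)$ is nonnegative). Each coefficient $\beta_I$ is an integer combination of logarithms of ratios of the $q$-bit nonzero conditional probabilities, so $|\beta_I| = O(q)$ and the value of $p(x)$ always lies in a range of size $\poly(n^{k}, q)$; as a first step I would round the $\beta_I$ to $\poly(n, q, 1/\varepsilon)$ bits and absorb the change of $p$ into the overall error budget. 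By Corollary~\ref{cor:width_of_bn_to_width_pft} I then fix a path decomposition of $H_p$ of width $w = O(k \log n)$ and the induced ordering $x_{\pi(1)}, \ldots, x_{\pi(n)}$, in which each bag occupies a consecutive block of layers.

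Next I would process the variables in the order $\pi$, producing the OBDD one layer at a time. The state kept at layer $j$ consists of (i) the assignment to the variables of the current bag that still occur in not-yet-completed terms of $p$, and (ii) a bucketed value of the accumulated partial sum $\sum_{I \subseteq \{\pi(1), \ldots, \pi(j)\}} \beta_I x_I$ over the terms already fully determined. Because each bag has at most $w+1$ vertices, component (i) contributes at most $2^{w+1} = 2^{O(k \log n)} = n^{O(k)}$ possibilities, which is $\poly(n^{k})$. Choosing the granularity of the buckets for component (ii) so that only $\poly(n, q, 1/\varepsilon)$ distinct values survive keeps the second component polynomial as well. Merging states with identical data yields a layered OBDD of width $\poly(n^{k}, q, 1/\varepsilon)$, and since each layer is generated by a single pass over a bag, the whole construction runs in time of the same order.

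The error analysis is where I expect the real difficulty to lie, for two reasons. First, the bucketing commits a small multiplicative error at each of the $n$ layers, and these must compose to a total multiplicative error of at most $\varepsilon$; this forces the per-layer relative granularity to be $\Theta(\varepsilon/n)$, so that $(1+\varepsilon/n)^{n} \le e^{\varepsilon}$ stays controlled, and one has to verify that the rounding never flips the final sign except on a set of inputs whose mass can be charged to this budget. Second, and this is the genuine departure from the uniform-distribution setting of~\cite{GopalanKMSVV}, the error is measured against the input distribution $P_{N,X}$ rather than the uniform one. Here I would use that the input distribution of a tree-width-$k$ BNC is generated by a GOBDD of width $\poly(n^{k})$, obtained from the same bounded path-width structure. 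Running the bucketing dynamic program synchronized with this generator lets me weight every partial state by its $P_{N,X}$-probability and thereby account directly for the misclassified near-boundary mass, so that the accumulated multiplicative error is controlled with respect to the correct distribution.

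The technically demanding part of the argument is thus the reconciliation in this last step: one must find a single variable ordering that simultaneously admits a width-$O(k\log n)$ path decomposition of $H_p$ and a small-width generator for $P_{N,X}$, and then show that the product of the classifier states and the generator states remains of size $\poly(n^{k}, q, 1/\varepsilon)$ while the probability-weighted rounding errors telescope to at most $\varepsilon$. Once both state components are kept polynomial and the per-layer errors are shown to compose multiplicatively under $P_{N,X}$, the size, running-time, and approximation claims follow together.
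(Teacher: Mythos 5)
Your high-level architecture (PTF via Proposition~\ref{prop:netp}, the $O(k\log n)$ path-width ordering of Corollary~\ref{cor:width_of_bn_to_width_pft}, a state consisting of the bag bits occurring in unfinished terms plus a compressed record of the partial sum, synchronized with a generator for the input distribution) does match the paper's outline, but two steps you lean on are genuine gaps. First, you assert that the input distribution $P_{N,X}$ of a tree-width-$k$ BNC is generated by a GOBDD of width $\poly(n^{k})$. This is false in general: Lemma~\ref{lem:new-joint} gives width $n^{O(k)}$ only for the joint distribution $P_N(X_1,\dots,X_n,C)$ and for the conditionals given $C$; the marginal $P_{N,X}$ is a mixture of two small-width distributions and need not itself have polynomial width (the paper cites~\cite{ShenC16} for exactly this point). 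The paper therefore needs Lemma~\ref{thm:new-delt-app}, which constructs, by splitting nodes of the conditional GOBDD, an \emph{approximate} generator of width $\poly\left(n^{k}, q, 1/\varepsilon\right)$ with pointwise multiplicative error $1\pm\varepsilon$; your proposal has no substitute for this step, and without it the ``synchronized'' dynamic program has no small-width distribution to synchronize with.

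Second, your error analysis compresses the partial-sum component by value bucketing at relative granularity $\Theta(\varepsilon/n)$, hoping the per-layer errors compose and that sign flips can be charged to the budget. That is precisely the step that fails: rounding the accumulated sum perturbs $p(x)$ additively, and the inputs whose value of $p$ lies within the rounding error of the threshold can carry essentially all of the probability mass under $P_{N,X}$, so there is no way to bound the mass of flipped inputs from the granularity alone (this is the same reason that rounding item weights does not yield an FPTAS for counting knapsack solutions). The paper's route, generalizing~\cite{GopalanKMSVV}, is different: build the exact (exponential-size) product OBDD of the classifier states with the approximate generator, use the fact that at each level the acceptance probability of a random suffix is a \emph{monotone} function of the accumulated sum $s$, and merge nodes whose acceptance probabilities agree within a $(1\pm\varepsilon)$ factor, locating the polynomially many distinguished nodes by binary search over $s$; the compression is carried out virtually so the exponential object is never materialized. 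This monotonicity-based merging, not value bucketing of $s$, is the key idea your sketch is missing, and it is also what makes the error control work with respect to the classifier's own input distribution rather than the uniform one.
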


\begin{proof} Let $N$ be a BNC with the properties given in the theorem.
Consider the polynomial $p$ provided by Proposition~\ref{prop:netp} and let $P_{N,X}$ be the input distribution of $N$. We need to construct an approximate OBDD for $sgn(p(x))$, where error is measured with respect to the input distribution.

Consider first the following (not necessarily minimal) layered OBDD for the variable ordering $x_1, \ldots, x_n$, computing $sgn(p(x))$ exactly.
A partial truth assignment $a = (a_1, \ldots, a_\ell)$ to the variables $x_1, \ldots, x_\ell$
will lead to a node labeled by $(s, b)$, where $s$
is the sum of the values of terms of $p$ which contain only variables from $x_1, \ldots, x_\ell$, and 
$b$ is the set of bits from $a$ which occur in so far undetermined terms containing some variables from $x_1, \ldots, x_\ell$. Thus partial truth assignments having the same $(s, b)$ lead to the same node, as those are equivalent for the rest of the computation. 

This OBDD can have exponential size. In order to decrease its size, the following issues need to be taken into consideration:
\begin{enumerate}
\item The variable ordering should be chosen so that $|b|$ is small.

\item Partial sums should be computed approximately by some kind of binning, to get an approximate result with small error.

\item As error is measured with respect to the input distribution,
the binning process needs to have access to the relevant marginals at each point of the computation.
\end{enumerate}

Item 1 is related to tree-width, more precisely to path-width, as the variable ordering is represented by a path. The ordering used is the one given by Corollary~\ref{cor:width_of_bn_to_width_pft} for the primal graph of the term-hypergraph $H_p$.
Item 3 requires an OBDD-like computational representation of the input distribution: the GOBDD defined above. However, it turns out that the GOBDD corresponding to the good path-width ordering of the variables is not necessarily small. This requires another approximation, this time of the input distribution by a small-width GOBDD.

Assume w.l.o.g. that variables are numbered according to the small path-width ordering and let $G = ([n],E)$ be the primal graph. The \emph{separator} of vertex $\ell$ is
\[ \separ_\ell = \{j \, : \, j \le \ell \,\, \text{and} \,\, (j, k) \in E \,\, \text{for some} \,\, k > \ell\}. \]

The \emph{vertex separation number} $\vs(G)$ of $G$ is the minimum of $\max_{\ell\le n-1}|\separ_\ell|$ over all orderings of the vertices. It holds that $\pw(G)=\vs(G)$ (see~\cite{Kinnersley92-uj}) .

\begin{lemma} \label{lem:new-elim}
For any assignment $\{a_1,\dots,a_n,c\}\in\{0,1\}^{n+1}$ and any $\ell \le n$
it holds that
\[P_N\left(a_{\ell}\,|\,a_{1},\dots,a_{\ell-1},c\right)=P_N\left(a_\ell\,|\, a_{\separ_{\ell-1}},c\right).\]
\end{lemma}

Note that the primal graph is undirected, but conditional probabilities are based upon the directed edges of the network, and so the proof of this lemma involves the notion of $d$-separation in Bayesian networks~\cite{Dar34,KolFri}.

Thus the joint distribution can be written as
\[
P_N(X_1=a_1,\dots,X_n=a_n,C=c)=p_{c}^0\prod_{\ell=2}^n P_N\left(a_\ell\,|\, a_{\separ_{\ell-1}},c\right).\]

\begin{lemma}\label{lem:new-joint}
The joint distribution $P_N(X_1,\dots,X_n,C)$ and the conditional distributions $P_N(X_1,\dots,X_n|C)$ have width $n^{O(k)}$.
\end{lemma}

The input distribution $P_{N,X}(a_1, \ldots, a_n)$
can be written as
\[ P_N(a_1, \ldots,a_n \, | \,0) \, P_N(0)+P_N(a_1, \ldots,a_n \, | \,1) \, P_N(1),\]
so by Lemma~\ref{lem:new-joint} it is a mixture of two polynomial-width distributions. However, it is not necessarily of polynomial width itself~\cite{ShenC16}. Nevertheless, it can be approximated by a polynomial-width distribution.
The edge probabilities in a GOBDD for the input distribution can be written as
\[ P_N(a_\ell \, | \,a_1, \ldots,a_{\ell-1})
= \frac{P_N(a_1, \ldots, a_\ell)}{P_N(a_1, \ldots, a_{\ell-1})}
=\frac{\sum_{c=0}^1 P_N(a_1, \ldots, a_\ell, c)}{\sum_{c=0}^1 P_N(a_1, \ldots, a_{\ell - 1}, c)}. \]

The partial truth assignments in the last expression correspond to paths in the GOBDD  of Lemma~\ref{lem:new-joint}, beginning with the start node. This suggests approximating
$P_N(a_\ell \, | \,a_1, \ldots,a_{\ell-1})$ by approximating the terms on the right. This can be achieved by splitting the nodes of the GOBDD, in order to encode an approximation of the conditional probabilities along paths leading to that node. This gives the following approximation of the input distribution.

\begin{lemma}\label{thm:new-delt-app}
There is a distribution $D(X_1, \ldots, X_n)$ of width $\poly\left(n^{k}, q, 1/\varepsilon\right)$ such that for every $a = (a_1, \ldots, a_n)$ it holds that
\[ (1 - \varepsilon) P_D(a) \le P_{N,X}(a) \le (1 + \varepsilon) P_D(a). \]
\end{lemma}

The next step is to construct a ``product'' OBDD computing the PTF \emph{exactly} combining the layered OBDD for the PTF and the small-width approximate GOBBD constructed in the previous lemma. This OBDD has exponential size. It also computes, for every node $v$,  acceptance probabilities of random assignments to the remaining variables, starting at $v$. As every node corresponds to a sum $s$ of the terms already evaluated, these probabilities are monotone functions of $s$.

The last step is to compress the OBDD to polynomial size. 
Nodes of the final OBDD are a set of polynomially many distinguished nodes selected on each level, and other nodes are merged into these nodes. Distinguished nodes are found by binary search based on the $s$ values, using the monotonicity property of the acceptance probabilities to guarantee small error. The compression process is ``virtual'', so the intermediate OBDD does not have to be constructed. \end{proof}

\section{Bayesian Network Classifiers for Evaluating the Explainability of Generalized Additive Models with Interactions}
\label{sec:gam}

\emph{Logistic regression} for Boolean variables is a probabilistic model of the form
\[ \log \frac{P(C = 1 \, | \, X = x)}{P(C = 0 \, | \, X = x)} = \alpha + \sum_{i=1}^n \beta_i x_i,\]
where the coefficients are to be learned.
A \emph{generalized additive model with interactions ($GA^2M$}) has the more general form
\[ g(P(C = 1 \, | \, X = x)) = \sum_{I \in {\cal I}} f_I(x^I),
\]
where $g$ is a given function, ${\cal I}$ is a family of subsets of $[n]$ and the functions $f_I$ are to be learned.

By Proposition~\ref{prop:netp} a BNC can be viewed as \emph{logistic regression model with interactions}. 
In this case the functions $f_I$ are products.
In particular, by Corollary~\ref{cor:tanw}, TAN can be written as

\[ \log \frac{p^0_1}{p^0_0} + \sum_{i \,\, \textrm{\scriptsize root}} \left( \log \frac{p^i_{01}}{p^i_{00}} (1 - x_i)  + \log \frac{p^i_{11}}{p^i_{10}} x_i \right) + \sum_{i \,\, \textrm{\scriptsize non-root}} \Bigg( \left( 
\log \frac{p^i_{001}}{p^i_{000}} (1 - x_i) (1 - x_{f(i)})\right)
\] \[+ \left( 
\log \frac{p^i_{011}}{p^i_{010}} (1 - x_i) x_{f(i)}\right) + \left( 
\log \frac{p^i_{101}}{p^i_{100}} x_i (1 - x_{f(i)})\right) + \left( 
\log \frac{p^i_{111}}{p^i_{110}} x_i x_{f(i)}\right) \Bigg),\]
where the $p^i$s are the local conditional probabilities and $f(i)$ is the parent of $i$.

BNC is a generative model, so
one can generate sets of random training and test examples from the joint distribution starting at the classifier node and proceeding down the forests, choosing nodes according to the local conditional probabilities. Using such samples, the $GA^2M$ algorithm can learn a
logistic regression model $q(x)$ with pairwise interactions.

The accuracy of a (deterministic) classifier $f$ over the whole domain
is

\begin{equation} \label{eq:acc7}
Acc(f) = P_{(X, C) \sim P_N}(C = f(X)) = \sum_x P_N(x, f(x)). 
\end{equation}
By its definition the polynomial $p$ is an \emph{optimal} classifier.  

A $GA^2M$ is considered to be an explainable model by analyzing the polynomial produced. For example, one can consider the set of terms 
${\cal I}$ and the functions $f_I$. In our case this corresponds to considering the terms and their coefficients.
As there is a ground truth, one can consider the \emph{similarity}
of the target polynomial $p$ and the learned polynomial $q$. 
The relevant notion of similarity is to be specified. Here we consider what is perhaps the simplest possibility, the \emph{fraction of common interaction terms}.

\subsection{Experimental Results on a Small Example}\label{subsec:examplebn}

In this section we present a small synthetic example of a target TAN and some experimental results on evaluating its similarity to the learned models. This is a snapshot of ongoing experiments. 

The TAN structure considered is shown in Fig.~\ref{fig:networkandvalues}(a), and the conditional probabilities are listed in Fig.~\ref{fig:networkandvalues}(b). Here, for each node $X_i$ that depends on nodes $X_j$ and $C$, each conditional probability $P(X_i=1\mid X_j=x_j,C=c)$ (for $x_j\in\{0,1\}$ and for $c\in\{0,1\}$) was selected uniformly at random from the union of intervals $(\frac{1}{7},\frac{2}{7})\cup(\frac{3}{7},\frac{4}{7})\cup(\frac{5}{7},\frac{6}{7})$.
For nodes that depend only on $C$, $P(X_i=1\mid C=c)$ was selected in the same way. Finally, we set $P(C=1)=\frac{1}{2}$.

\begin{figure}[htbp]
    \begin{subfigure}[b]{0.48\textwidth}
        \centering
        \includegraphics[width=\textwidth]{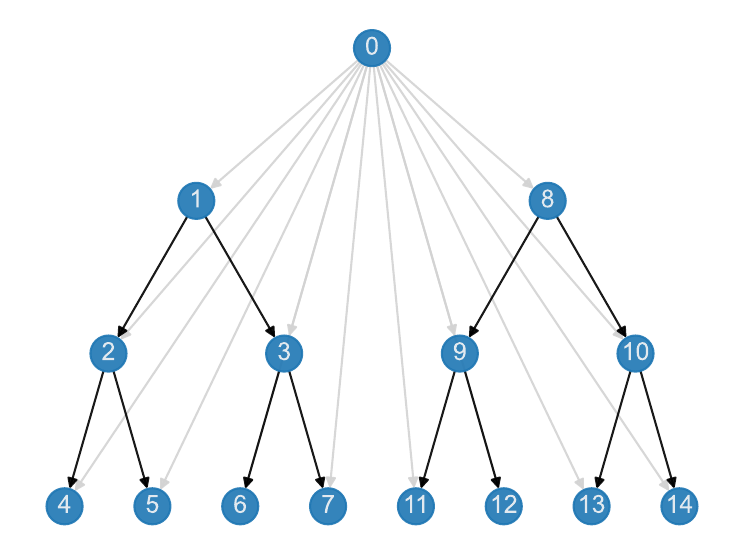}
        \caption{Graph structure. 
        0 is the source node $C$ \newline and $i$ is the node for variable $X_i$.\newline}
    \end{subfigure}
    \begin{subfigure}[b]{0.51\textwidth}
        \centering
        \scalebox{0.75}{
        \centering
        \small
        \setlength\tabcolsep{2.5pt} 
        \renewcommand{\arraystretch}{1.} 
        \begin{tabular}{lccccc}
        \toprule
        \textbf{$X$} & Parent & \textbf{$P(1\mid 0, 0)$} & \textbf{$P(1\mid 0, 1)$} & \textbf{$P(1\mid 1, 0)$} & \textbf{$P(1\mid 1, 1)$} \\ 
        \midrule
        \textbf{1}   & - & 0.234 & - & - & 0.732 \\
        \textbf{2}   & 1 & 0.472 & 0.164 & 0.156 & 0.455 \\
        \textbf{3}   & 1 & 0.506 & 0.203 & 0.812 & 0.458 \\
        \textbf{4}   & 2 & 0.810 & 0.202 & 0.508 & 0.734 \\
        \textbf{5}   & 2 & 0.281 & 0.759 & 0.527 & 0.554 \\
        \textbf{6}   & 3 & 0.148 & 0.453 & 0.268 & 0.157 \\
        \textbf{7}   & 3 & 0.743 & 0.470 & 0.449 & 0.540 \\
        \textbf{8}   & - & 0.235 & - & - & 0.479 \\
        \textbf{9}   & 8 & 0.284 & 0.535 & 0.469 & 0.827 \\
        \textbf{10}  & 8 & 0.844 & 0.471 & 0.184 & 0.733 \\
        \textbf{11}  & 9 & 0.559 & 0.471 & 0.798 & 0.224 \\
        \textbf{12}  & 9 & 0.176 & 0.790 & 0.850 & 0.785 \\
        \textbf{13}  & 10 & 0.149 & 0.163 & 0.542 & 0.433 \\
        \textbf{14}  & 10 & 0.483 & 0.220 & 0.236 & 0.194 \\ 
        \bottomrule
        \end{tabular}
        }
        \caption{Conditional probabilities. $P(a\mid b, c)$ is shorthand for $P(X_i=a\mid \text{Parent}(X_i)=b,C=c)$. Values have been rounded to 3dp.}
    \end{subfigure}
    \caption{The TAN used in the experiment. }
    \label{fig:networkandvalues}
\end{figure}

We use the InterpretML software package~\cite{CarIML} Python implementation of $GA^2M$, referred to as explainable boosting machines (EBM)~\footnote{$GA^2M$ returns functions $f_i, f_{ij}$ as step functions and are rewritten in polynomial form.}. The number of interaction terms is specified to be 12, the number of edges between the $X$-variables. All other settings were left as default. To ensure accurate arithmetic with the TAN, particularly with repeated multiplication, all operations were performed with Python's decimal module. Pgmpy \cite{pgmpy} was used to build the graph structure of the TAN, and NetworkX \cite{NetworkX} was used for visualization.
Classifiers were trained on samples of sizes up to 2000 (out of $2^{14}$ possible inputs), with step size 5, using 5-fold cross validation. Accuracies are shown by the dashed curve.

Accuracies over the whole domain (as in (\ref{eq:acc7})) are also shown on the top part of Fig.~\ref{fig:accuracyvssize}. The accuracy of the target classifier $p(x)$ is 0.9266, corresponding the top line. The accuracies of the learned classifiers $q(x)$ (averaged over the 5 folds) are shown by the thick middle curve.

The explanation quality of the classifiers produced 
is measured by the fraction of interaction terms which are common to $p$ and $q$, for every sample size averaged over the 5 folds. The percentages are shown by the set of points in the lower half of Fig.~\ref{fig:accuracyvssize}. The percentages are growing with sample size and stabilize around $70 \%$.

\begin{figure}[hbt]
    \centering
    \includegraphics[width=0.8\textwidth]{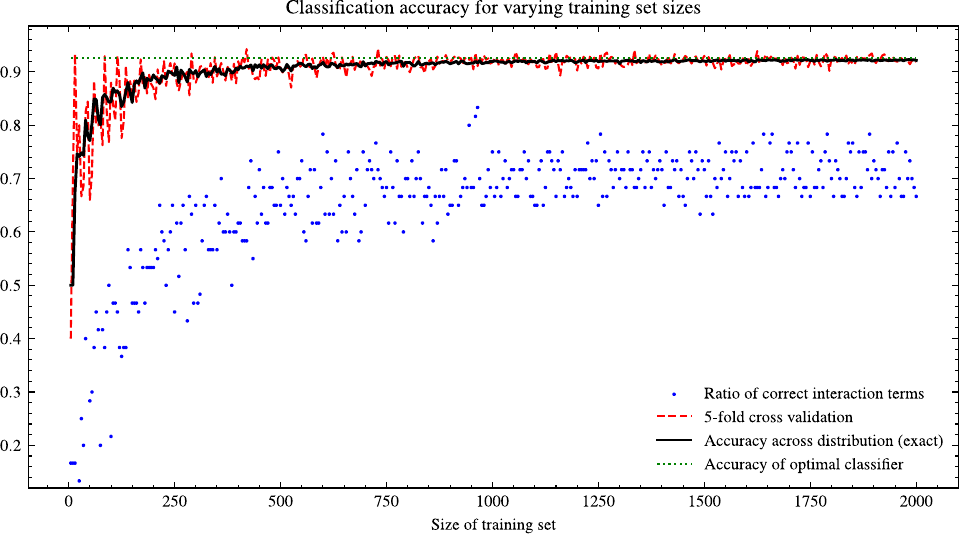}
    \caption{Optimal classifier versus learned classifiers: accuracy and term overlap}
    \label{fig:accuracyvssize}
\end{figure}

In order to get more detailed information, we present the coefficients of the optimal classifier $p$ and three classifiers $q_1, q_2, q_3$ obtained from sample size 1600 in  Tables~\ref{table:polytermslinear} and \ref{table:polytermsquadratic} showing the affine, resp., quadratic terms. The accuracy of the classifiers is 0.9215, 0.9212 and 0.9198, respectively.

\begin{table}[htbp]\caption{Coefficients of \emph{affine} terms of BNC polynomial ($p$) and some example $GA^2M$ polynomials ($q_1$, $q_2$, and $q_3$).}\label{table:polytermslinear}
\centering
\scriptsize
\renewcommand{\arraystretch}{2} 
\begin{tabular}{lccccccccccccccc}
\toprule
      & Const & $x_1$ & $x_2$ & $x_3$ & $x_4$ & $x_5$ & $x_6$ & $x_7$ & $x_8$ & $x_9$ & $x_{10}$ & $x_{11}$ & $x_{12}$ & $x_{13}$ & $x_{14}$ \\ 
 \midrule
$p$   &  0.11 &  1.87 & -2.53 & -1.71 & -2.82 & 2.08  & 1.56  & -1.17  & -1.93 & 3.96  & -1.94 & -0.35 & 2.87 &  0.10 & -1.20 \\
$q_1$ & -0.46 &  1.84 & -1.90 & -1.48 & -2.03 & 1.27  & 1.41  & -1.21 & -0.92 & 3.69  & -1.35 & -0.39 & 2.92 & -0.37 & -0.46 \\ 
$q_2$ &  0.45 &  2.18 & -2.97 & -2.36 & -2.86 & 2.19  & 1.14  & -0.88 & -0.85 & 3.61  & -0.99 & -0.50 & 2.36 &  0.35 & -0.50 \\
$q_3$ & -0.41 &  2.27 & -2.26 & -1.68 & -2.69 &  2.09 & 1.39  & -1.37 & -1.61 & 4.02  & -1.88 &  0.29 & 4.05 & -0.66 & -1.07 \\ 
\bottomrule
\end{tabular}
\end{table}

\begin{table}[htbp]\caption{Coefficients of \emph{quadratic} (interaction) terms of BNC polynomial ($p$) and some example $GA^2M$ polynomials ($q_1$, $q_2$, and $q_3$).}\label{table:polytermsquadratic}
\centering
\scriptsize
\setlength\tabcolsep{2.5pt} 
\renewcommand{\arraystretch}{2.2} 
\scalebox{0.82}{
\begin{tabular}{ lcccccccccccccccccccc }\toprule
& \multicolumn{12}{c}{Terms present in $p$} & \multicolumn{6}{c}{Terms not present in $p$}\\
\cmidrule(lr){2-13} \cmidrule(lr){14-19} 
 &  $x_1x_2$ &  $x_1x_3$ &  $x_2x_4$ &  $x_2x_5$ &  $x_3x_6$ &  $x_3x_7$ &  $x_8x_9$ &  $x_8x_{10}$ &  $x_9x_{11}$ &  $x_9x_{12}$ &  $x_{10}x_{13}$ &  $x_{10}x_{14}$ & $x_{1}x_{6}$ & $x_{1}x_{7}$ & $x_{2}x_{3}$ & $x_{8}x_{12}$ &  $x_{8}x_{13}$ &  $x_{11}x_{12}$ \\ 
 \midrule
 $p$   & 3.03 & -0.24 & 3.80 & -1.98 & -2.24 & 1.55 & 0.62 & 4.31 & -2.27 & -3.31 & -0.54 & 0.95 &       &       &      &       &      &       \\
 $q_1$ & 1.88 &       & 2.57 & -1.18 & -1.88 & 1.06 &      & 3.39 & -1.48 & -3.07 &       &      &       &  0.44 &      & -0.51 & 0.67 & -0.39 \\
 $q_2$ & 3.45 &       & 3.39 & -1.71 & -1.75 & 1.65 &      & 3.66 & -1.79 & -2.97 & -1.03 &      & -0.33 & -0.05 & 0.45 &       &      &       \\
 $q_3$ & 2.80 & -0.86 & 3.51 & -2.07 & -2.38 & 1.79 &      & 4.14 & -2.00 & -3.57 &       & 0.94 &       &       &      &       & 0.21 & -1.23 \\
 \bottomrule
\end{tabular}
}
\end{table}

Based on Tables~\ref{table:polytermslinear} and \ref{table:polytermsquadratic}, we can make the following observations.

\begin{itemize}\itemsep0em
    \item The signs of the constant term do not always agree. The signs of the linear terms always agree, with the exceptions of $x_{11}$ and $x_{13}$.
    \item Of the 12 interaction terms present in $p$, 8 always appear in the $GA^2M$ polynomials. Their sign is always correct. The intersection sizes are 8, 9, 10, respectively.
    \item The coefficients of terms of $p$ missing or having incorrect sign are smaller than other coefficients.
    \item The accuracy of the affine parts are 0.7525 for $p$, and 0.7648, 0.7654, 0.7640
    for $q_1, q_2, q_3$, respectively, so the affine parts of $p$ are worse than those of $q_1, q_2, q_3$. This seems to be due to the fact that $GA^2M$, in a greedy manner, produces the affine terms first~\cite{ebm}.
    \item A comparison of the terms missing from $p$ with the new terms  added in $q_1, q_2, q_3$ shows that in several cases these can be paired up in such a way that a missing (parent, child) pair corresponds to a (grandparent, grandchild) or a (sibling1, sibling2) pair. For example,
    in $q_1$, the term $x_1 x_3$ is missing, but $x_1 x_7$ is added.
\end{itemize}

In summary, on this small example, using a simple evaluation, the $GA^2M$ polynomials seem to have good explainability properties. It seems to be of interest to have a theoretical understanding of these properties. 
Extensions to Bayesian networks for real-world applications (see, e.g.,~\cite{sesen}) could also be considered.

\section{A Complexity Lower Bound for Positive Polynomial Threshold Representations} \label{sec:mon}

In this section we give a separation result between the sizes of positive and general QTF representations of an explicitly defined monotone Boolean function.
 The weight of 
$x \in \{0, 1\}^n$ is $|x| = \sum_{i=1}^n x_i$. If $|x| = \ell$ then $|x|$ is on level $\ell$.
The Boolean threshold function $Th^n_{\ell}(x)$ has value 1 iff $|x| \ge \ell$.

 Let $n = 2 k$  and
\[ f_n(x)  = Th_{k + 1}(x) \vee \left(Th_k(x) \wedge \bigwedge_{j=1}^k (\bar{x}_{2j-1} \vee \bar{x}_{2j})\right). \]
A PTF representation of the function $f_4(x)$ is given in  (\ref{eq:ex}).

\begin{theorem} The function $f_n$

a) is a monotone Boolean function,

b) is a QTF of size $O(n)$ (and tree-width 1),

c) has a positive QTF representation of size $O(n^2)$,

d) every positive QTF representation of $f_n$ has size $\Omega(n^2)$ (and so tree-width $\Omega(n)$).
\end{theorem}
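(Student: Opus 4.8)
The plan is to dispatch parts (a)--(c) by explicit constructions and to concentrate the real effort on the lower bound (d). Throughout, write $w = |x|$ for the weight and $m = \sum_{j=1}^k x_{2j-1}x_{2j}$ for the number of \emph{full pairs} (designated pairs $\{2j-1,2j\}$ with both bits set); unwinding the definition gives $f_n(x)=1$ iff $w\ge k+1$, or $w=k$ and $m=0$. Monotonicity (a) is then immediate: if $f_n(x)=1$ and $y\ge x$, then either $|y|\ge k+1$, or $|y|=k$, which forces $y=x$. For (b) I would exhibit
\[ p(x) = k\sum_{i=1}^n x_i - \sum_{j=1}^k x_{2j-1}x_{2j} - k^2, \]
and verify $\sgn(p)=f_n$ by checking $kw - m \ge k^2$ in the three regimes $w\le k-1$, $w=k$, and $w\ge k+1$ (the last using $m\le\lfloor w/2\rfloor\le k$). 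Its quadratic terms form a perfect matching on $[n]$, so the term-hypergraph has tree-width $1$, and the size is $n+k+1=O(n)$.

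For (c) I would use the elementary identity that the number of pairs of ones lying in \emph{different} designated pairs (call these \emph{cross pairs}) equals $\binom{w}{2}-m$, and propose the positive QTF
\[ q(x) = \sum_{\{i,j\}\ \mathrm{cross}} x_i x_j \ \ge\ \binom{k}{2}. \]
A short case analysis on $w$ (again using $m\le\lfloor w/2\rfloor$) shows $q(x)\ge\binom{k}{2}$ holds exactly on the true points: it fails for $w\le k-1$ since then $q(x)\le\binom{k-1}{2}$, it equals $\binom{k}{2}$ iff $w=k$ and $m=0$, and it exceeds $\binom{k}{2}$ for all $w\ge k+1$. This polynomial has $\binom{n}{2}-k=O(n^2)$ terms, nonnegative coefficients and nonnegative threshold, giving (c) (and, since positive PTFs represent only monotone functions, re-deriving (a)).

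The heart is (d). The key structural fact is that the true points of weight \emph{exactly} $k$ are precisely the $2^k$ \emph{transversals} (one $1$ in each designated pair), while any weight-$k$ point containing a full pair is false. Write an arbitrary positive QTF as $q=\sum_i a_i x_i+\sum_{i<j}b_{ij}x_ix_j$ with all $a_i,b_{ij}\ge 0$ and threshold $t$, so every true point satisfies $q\ge t$ and every false point $q<t$. Fix two designated pairs, say $P_1=\{1,2\}$ and $P_2=\{3,4\}$, and any transversal $r$ on the other $k-2$ pairs. The four transversals on $P_1,P_2$ are true, and the two weight-$k$ points in which one of $P_1,P_2$ is full and the other empty are false. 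Forming the combination $[\,q(1,0,1,0)-q(0,0,1,1)\,]+[\,q(0,1,0,1)-q(1,1,0,0)\,]$, the common $r$-contribution cancels inside each bracket and the single-coordinate contributions $a_i$ (together with all cross terms to $r$) cancel across the two brackets, leaving $b_{13}+b_{24}-b_{12}-b_{34}$, which is a sum of two (true $-$ false) $>0$ quantities and hence strictly positive; as $b_{12},b_{34}\ge 0$ this gives $b_{13}+b_{24}>0$. Thus at least one cross coefficient between $P_1$ and $P_2$ is nonzero. These cross coefficients are disjoint across the $\binom{k}{2}$ choices of a pair of designated pairs, so at least $\binom{k}{2}=\Omega(n^2)$ quadratic terms are nonzero, yielding the size bound. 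For tree-width, note that a graph on $n$ vertices of tree-width $\tw$ has at most $\tw\cdot n$ edges; since the primal graph of the term-hypergraph carries $\Omega(n^2)$ edges, its tree-width is $\Omega(n)$.

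The main obstacle is step (d), specifically isolating a \emph{forced} nonzero coefficient from inequalities alone. Positivity yields only one-sided bounds, so the comparison of true and false points must be arranged so that all unknown linear terms $a_i$ and all cross terms involving the frozen transversal $r$ cancel exactly, leaving a clean strict inequality among the four cross coefficients of a single pair of designated pairs. Getting this cancellation right---and confirming that all six comparison points genuinely have weight $k$ with the claimed truth values, so that the true/false inequalities really apply---is the delicate part; once it is in place, the disjointness counting and the edge-count-to-tree-width implication are routine.
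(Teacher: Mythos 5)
Your proposal is correct and takes essentially the same route as the paper: the same matching-based polynomial for (b), an explicit positive QTF of size $O(n^2)$ for (c) (yours keeps only the cross quadratic terms with threshold $\binom{k}{2}$, while the paper keeps the linear part plus the scaled non-matching terms with threshold $k+1$ --- a cosmetic difference), and for (d) the identical four-point comparison of the two transversals $(1,0,1,0,r)$, $(0,1,0,1,r)$ against the full-pair points $(0,0,1,1,r)$, $(1,1,0,0,r)$, which the paper phrases as a contradiction under the assumption that no mixed term exists between the two blocks, whereas you derive $b_{13}+b_{24}>b_{12}+b_{34}\ge 0$ directly. Your cancellation computation, the disjointness count over the $\binom{k}{2}$ block pairs, and the edge-count argument for tree-width $\Omega(n)$ are all sound.
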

\begin{proof} For \emph{a)} note that $f_n$ is a \emph{slice function}, i.e., for some level $\ell$ it is 0 on level $\ell - 1$ and below, it is 1 on level $\ell + 1$ and above, and so the only non-constant level is level $\ell$. Every slice function is monotone. 

Part \emph{b)} follows by considering the polynomial
\[ p(x) = \sum_{i=1}^n x_i  - \left(\frac{1}{k} \sum_{j=1}^k x_{2j-1} x_{2j}\right) - k. \]
The hyperedges of the term hypergraph are the singletons and the matching 
$M = \{(2j - 1, 2j) \, : \, 1 \le j \le k\}$. 

Part \emph{c)} follows by considering 
\[ \sum_{i=1}^n x_i  \; + \; \frac{1}{
{\binom{k}{2}} 
} \; \sum_{(i,j) \not\in M} x_i x_j \ge k + 1.\]

For Part \emph{d)} assume that $p(x) \ge t$ is a positive QTF representation of $f_n$.
The claim follows if we show that for every $i, j$,
where $1 \le i, j \le k$, the polynomial $p$ contains a mixed term $x_{2 i - a} x_{2 j - b}$ for some $a, b \in \{0, 1\}$ between the $i$'th and $j$'th blocks of odd and even bits.  Assume w.l.o.g. that $i = 1, j = 2$ and there is no mixed term between the first two blocks. Let $z = (0, 1, 0, 1, \ldots)$ be the alternating truth assignment to the variables $x_5, \ldots, x_n$. Then by definition
\[ p(0,0,1,1, z) < t, \,\,\, p(1,1,0,0, z) < t, \,\,\, p(1,0,1,0, z) \ge t, \,\,\,  p(0,1,0,1, z) \ge t.\]
Let $\gamma_{i,j}$ be the coefficient of $x_i x_j$. Then it holds that  $\gamma_{1,3} = \gamma_{2,4} = 0$. Thus
\[ p(0,0,1,1, z) + p(1,1,0,0, z) = p(1,0,1,0, z) + p(0,1,0,1, z) + \gamma_{1,2} + \gamma_{3,4},\]
a contradiction. 
\end{proof}

A first open question is to determine the size of higher degree positive PTF representations of $f_n$.
More generally, larger separations between positive and general PTF complexities would be of interest.

\subsection*{Acknowledgement}
We would like to thank Sebastian Bordt, Rich Caruana, P\'eter Hajnal, Lisa Hellerstein, Michal Moshkovitz and Lev Reyzin for discussions.
The third author is partially supported by NSF grants 2217023 and 2240532, and by the Artificial Intelligence National Laboratory Program (RRF-2.3.1-21-2022-00004).
Part of this work was done while visiting the Simons Institute for the Theory of Computing.

\bibliographystyle{plain}
\bibliography{final-Chubarian-Joyce-Turan}

\end{document}